\numberwithin{equation}{section}
\numberwithin{equation}{section}
\newtheorem{definition}{Definition}[section]
\newtheorem{assumption}{Assumption}[section]
\newtheorem{lemma}{Lemma}[section]
\newtheorem{theorem}{Theorem}[section]
\def\BibTeX{{\rm B\kern-.05em{\sc i\kern-.025em b}\kern-.08em
    T\kern-.1667em\lower.7ex\hbox{E}\kern-.125emX}}
\begin{document}

\title{ADF-LoRA: Alternating Low-Rank Aggregation for Decentralized Federated Fine-Tuning}

\author{
Xiaoyu Wang, Xiaotian Li, Zhixiang Zhou, Chen Li, and Yong Liu\\
Department of Electrical and Computer Engineering, New York University, Brooklyn, NY, USA \\
Email: wang.xiaoyu@nyu.edu, xl3399@nyu.edu, zz4819@nyu.edu, chen.lee@nyu.edu, yongliu@nyu.edu
}

\maketitle

\begin{abstract}
This paper revisits alternating low-rank updates for federated fine-tuning and examines their behavior in decentralized federated learning (DFL). While alternating the LoRA matrices has been shown to stabilize aggregation in centralized FL, extending this mechanism to decentralized, peer-to-peer communication introduces new challenges due to phase-state mismatch and block-wise divergence across clients. We introduce ADF-LoRA, which synchronizes the update of only one low-rank matrix per round and mixes both matrices to maintain more consistent parameter states under decentralized propagation. This design preserves the cross-term suppression effect of alternating updates while improving stability in serverless topologies. We provide a convergence analysis under standard smoothness assumptions and evaluate ADF-LoRA on multiple GLUE tasks. Experiments show that ADF-LoRA achieves faster and smoother convergence and delivers the highest average accuracy across tasks, outperforming existing LoRA variants in decentralized FL by a consistent margin.
\end{abstract}

\begin{IEEEkeywords}
Federated learning, Decentralized Federated Learning, Low-rank Adaptation, Alternating Optimization, Parameter-efficient Fine-tuning.
\end{IEEEkeywords}

\section{Introduction}

Large Language Models (LLMs) such as GPT~\cite{achiam2023gpt},
LLaMA~\cite{touvron2023llama}, GLM~\cite{zengglm}, and DeepSeek~\cite{liu2024deepseek}
have achieved state-of-the-art performance on diverse language tasks. Adapting
these models in a privacy-preserving manner motivates federated learning (FL),
where data remain distributed across clients.%~\cite{mcmahan2017communication}. 
Parameter-efficient fine-tuning (PEFT) methods, especially LoRA~\cite{hulora},
are attractive for FL due to their small memory and communication footprints.
LoRA introduces trainable low-rank factors $A$ and $B$, but aggregating these
matrices directly in FL creates a bilinear inconsistency: averaging $A$ and $B$
independently produces cross-client interactions $B_iA_j$ ($i\neq j$) that distort
the global update and degrade convergence. 
% Recent approaches mitigate this issue
% using block-diagonal stacking~\cite{wangflora}, SVD reconstruction~\cite{bai2024federated},
% or freezing one LoRA block~\cite{sunimproving}, but these solutions trade off
% scalability, efficiency, or expressiveness.
Several recent studies have attempted to mitigate this challenge. FLoRA~\cite{wangflora} eliminates cross terms via block-diagonal stacking, at the cost of parameter growth proportional to the number of clients. FlexLoRA~\cite{bai2024federated} reconstructs unified updates using SVD, but suffers from scalability issues due to costly server-side factorization. FFA-LoRA~\cite{sunimproving} fixes one LoRA matrix across all clients to prevent bilinear interference, though at the cost of reduced expressiveness.

A recent line of work shows that \emph{alternating} the optimization of $A$ and
$B$ stabilizes LoRA aggregation in \emph{centralized} FL (CFL) by ensuring that
only one block is updated and aggregated per round~\cite{chen2025robust}.
However, this mechanism relies on strict synchronization: all clients must hold
identical parameters and follow identical phases. In decentralized FL (DFL),
where clients mix parameters with only their neighbors, information propagates
gradually and client states drift within each phase~\cite{wang2025decentralized}. As a result, the key
assumptions enabling cross-term suppression in centralized alternating LoRA no longer hold in the DFL setting. 

This paper revisits alternating LoRA through the lens of DFL and asks: {\it how can alternating low-rank updates be made stable and effective under peer-to-peer communication?} 
% While alternating LoRA has shown benefits in CFL, directly
% extending it to decentralized settings introduces new challenges. 
We show that decentralized mixing interacts with block-coordinate updates
in a nontrivial way, leading to phase-state mismatch and drift between
LoRA directions.

To address these issues, we introduce \texttt{ADF-LoRA}, a decentralized alternating LoRA
framework that incorporates (i) interval-based directional switching and (ii)
joint mixing of both LoRA blocks to maintain cross-client alignment. This yields a more stable alternating process and keeps decentralized LoRA training effective even on challenging NLP tasks.

\subsection{Contributions}
Our main contributions are summarized as follows:
\begin{itemize}
    \item We analyze why alternating LoRA, while stable in centralized FL, becomes unreliable in decentralized settings, showing that peer-to-peer mixing introduces phase-state mismatch and block-wise drift.
    
    \item We introduce \texttt{ADF-LoRA}, an alternating scheme for DFL that synchronizes the update of a single low-rank block per round and applies joint mixing of both blocks to maintain more consistent parameter states across clients.
    
    \item We study how the switching interval $T$ influences stability and show that moderate intervals (e.g., $T=5$) provide the best balance between block-wise coordination and optimization flexibility.
    
    \item Extensive experiments on four GLUE tasks demonstrate that \texttt{ADF-LoRA} achieves faster and smoother convergence and attains the best average accuracy across tasks, with clear improvements on more challenging datasets such as QQP and MNLI.
\end{itemize}

\section{Related Work}
\subsection{Federated Learning}
Federated learning (FL) enables training across distributed clients without sharing raw data~\cite{mcmahan2017communication}. Decentralized FL~\cite{lian2017can, lalitha2019peer} removes the central server, causing client models to drift and complicating aggregation. We study this setting under LoRA-based fine-tuning, where the bilinear parameterization introduces unique inconsistencies in both CFL and DFL.

\subsection{Federated Learning}
Federated learning (FL) enables collaborative training across distributed
clients without sharing raw data~\cite{mcmahan2017communication}. Key
challenges include data heterogeneity~\cite{zhao2018federated}, partial client
participation~\cite{li2020federated}, and high communication
cost~\cite{kairouz2021advances}. To reduce communication overhead, prior work
explores model compression~\cite{sattler2019robust}, sparse
updates~\cite{aji2017sparse}, personalized models~\cite{fallah2020personalized},
and adaptive aggregation schemes.

Decentralized FL~\cite{lian2017can, lalitha2019peer, wang2025decentralized} and
asynchronous FL~\cite{xie2019asynchronous} relax or remove the central server,
making client states evolve at different rates and complicating convergence
analysis. Our work examines these issues specifically for LoRA-based low-rank
adaptation, where the bilinear parameterization introduces unique aggregation
inconsistencies under both CFL and DFL.

\subsection{LoRA in Federated Learning}
LoRA~\cite{hulora} is widely adopted in FL due to its low communication cost and
parameter efficiency. Several variants extend LoRA to heterogeneous and
personalized settings, including SLoRA~\cite{babakniya2023slora},
FedSA-LoRA~\cite{guo2025selective}, and
pFedLoRA~\cite{yi2024pfedloramodelheterogeneouspersonalizedfederated}. These
methods address client diversity but do not resolve the bilinear inconsistency
inherent to aggregating independently trained LoRA factors.

A complementary line of work tackles this issue directly. FLoRA~\cite{wangflora}
uses block-diagonal concatenation to avoid cross terms but enlarges model size;
FlexLoRA~\cite{bai2024federated} reconstructs global updates via SVD; and
FFA-LoRA~\cite{sunimproving} freezes one LoRA matrix to suppress cross-client
interference. While effective, these methods trade off scalability, flexibility,
or representational capacity.

Most relevant to this work, RoLoRA~\cite{chen2025robust} shows that alternating
updates of $A$ and $B$ improve stability in \emph{centralized} FL. However, its
analysis assumes synchronous rounds and server-enforced parameter consistency,
and evaluates only fixed odd--even schedules under centralized aggregation.
Whether alternating LoRA remains stable in decentralized peer-to-peer settings—
where clients mix stale parameters and cannot maintain a shared frozen block—
has not been investigated, and naive extensions behave poorly in practice.

Beyond federated settings, AltLoRA~\cite{yu2025altlora} applies alternating
projections to improve gradient approximation in centralized single-machine
fine-tuning. This literature focuses on optimization quality rather than
multi-client aggregation and is orthogonal to our study of alternating LoRA
under communication-constrained FL.

\section{Motivation and Analysis}

LoRA models a weight update using a low-rank factorization $\Delta W_i = B_iA_i$.
However, federated learning aggregates the \emph{parameters} $(A_i,B_i)$ rather
than the product. Averaging the factors across clients yields
\[
A^{\mathrm{agg}}=\sum_i w_iA_i,\qquad
B^{\mathrm{agg}}=\sum_i w_iB_i,
\]
and reconstructing the update gives
\[
B^{\mathrm{agg}}A^{\mathrm{agg}}
= \sum_i w_i^2 B_iA_i
\;+\;
\underbrace{\sum_{i\neq j} w_iw_j\, B_iA_j}_{\text{cross-client interference}}.
\]
The first term corresponds to the true average update, while the second mixes
$B_i$ from one client with $A_j$ from another---an update direction that no
client ever computed. These cross terms are the primary source of instability
when LoRA is trained on heterogeneous data.

\paragraph{Why alternating works in CFL?}
Recent work shows that in \emph{centralized} FL (CFL), alternating updates
eliminate these cross terms~\cite{chen2025robust}. In a B-phase all clients
share the same $A$, so $B_iA$ averages cleanly; in an A-phase they share the
same $B$, so $BA_i$ averages cleanly. Server-enforced synchronization guarantees that the ``frozen'' block
is identical across clients, making each aggregated update cross-term-free and
restoring the stability of two-block coordinate descent.

\paragraph{Why this breaks in DFL?}
In DFL, clients exchange parameters only with neighbors, so
their models are no longer identical within a phase. Even if all clients agree
to be in an A-phase, their local $A_i$ differ due to incomplete propagation
through the network. Consequently, a B-phase no longer eliminates cross terms:
% \[
% \text{client $i$ holds } A_i,\quad
% \text{mixes with neighbor } A_j
% \quad\Longrightarrow\quad
% \text{updates involve } B_iA_j\text{ with } i\neq j.
% \]
\[
\begin{aligned}
&\text{client $i$ holds } A_i \text{ and receives } A_j \neq A_i\text{ from neighbors} \\
&\Longrightarrow
\text{B-phase can no longer eliminate the } B_i A_j \text{ cross terms}.
\end{aligned}
\]
This introduces two DFL-specific effects:

\begin{itemize}
    \item \textbf{Phase-state mismatch:} clients enter the same logical phase
    with different versions of the ``frozen'' block, violating the assumption of
    a shared $A$ or $B$ that centralized alternating relies on.
    \item \textbf{Block-wise drift:} the frozen block gradually diverges across
    clients, so the cross-client interactions that alternating was designed to
    suppress reappear during mixing.
\end{itemize}

These effects fundamentally alter the behavior of alternating LoRA in DFL.
\textit{Our goal is therefore to understand: 1) how decentralized propagation affects
alternating LoRA? 2) whether simple coordination mechanisms can restore its
stability without requiring a central server?}

\section{Methodology}
The above analysis shows that decentralized communication disrupts the
synchronization required for alternating LoRA to remain stable, mainly due to
phase-state mismatch and block-wise drift. To study and mitigate these effects,
we first formalize the decentralized FL communication model, and then introduce
our joint-mixing alternating LoRA procedure.
\subsection{Decentralized FL Communication Model}
We adopt a standard decentralized FL (DFL) setting where clients maintain local
parameters and exchange them with neighbors through a time-varying mixing
matrix $W_t$:
\[
x_i^{t+1}=\sum_{j=1}^N (W_t)_{ij}\, x_j^t.
\]
The matrices $\{W_t\}$ are \emph{doubly-stochastic} and satisfy standard connectivity and
spectral-gap assumptions used in decentralized optimization. All clients perform local updates every round. Communication is serverless:
each round consists of (i) local computation and (ii) optional peer-to-peer
mixing when communication is triggered. This contrasts with centralized FL
(CFL), where a parameter server enforces identical global states at each round.

% Figure~\ref{fig:cfl_vs_dfl} summarizes the difference between centralized and
% decentralized aggregation, highlighting that DFL propagates updates gradually
% and therefore does not maintain synchronized parameters across clients.
% The matrices $\{W_t\}$ are row-stochastic and satisfy common connectivity and
% spectral-gap assumptions used in decentralized optimization. Communication is
% serverless: each client performs local updates every round and engages in
% peer-to-peer averaging with probability $p$.

% Unlike centralized FL (CFL), DFL does not guarantee identical parameters across
% clients at the start of each round—updates propagate gradually across the
% network. This lack of alignment directly interacts with alternating LoRA, which
% assumes synchronized block updates.

\subsection{Alternating LoRA in FL}

LoRA represents a low-rank update as $\Delta W = BA$, where $A$ and $B$
denote the down- and up-projection matrices. Alternating LoRA updates only
one block per phase:
\[
A^{t+1}_i = A^t_i - \eta\nabla_A\mathcal{L}_i(A^t_i,B^t_i),\qquad  
B^{t+1}_i = B^t_i,
\]
during the A-phase, and symmetrically for the B-phase. This block splitting 
removes LoRA cross terms and stabilizes optimization.

\paragraph{CFL: Centralized Alternating LoRA (RoLoRA).}
In centralized FL, all clients follow identical A/B phases and the server conducts model aggregation through FedAvg:
\[
A^{t+1} = \frac1N\sum_{i=1}^N A^{t+1}_i,\qquad
B^{t+1} = \frac1N\sum_{i=1}^N B^{t+1}_i.
\]
Since only the active block is aggregated in each phase, the centralized
procedure behaves as standard two-block coordinate descent.

\paragraph{DFL: Decentralized Alternating LoRA Baseline.}
In DFL, aggregation is replaced by peer-to-peer mixing, but only the \emph{active}
block is synchronized. During an A-phase:
\[
A^{t+1}_i = \sum_j (W_t)_{ij} A^t_j, \qquad
B^{t+1}_i = B^t_i,
\]
and during a B-phase:
\[
B^{t+1}_i = \sum_j (W_t)_{ij} B^t_j, \qquad
A^{t+1}_i = A^t_i.
\]
Because the frozen block is not synchronized across clients during its inactive
phase, clients gradually accumulate
inconsistent versions of it, leading to \emph{phase-state mismatch} and
block-wise drift, the main failure mode of alternating LoRA under DFL.
% where $W_t$ is a doubly-stochastic mixing matrix. Unlike CFL, information
% propagates gradually across the network, so different clients may hold
% different versions of the frozen block ($A$ or $B$) within the same phase.
% This \emph{phase-state mismatch} induces block-wise drift, altering the
% behavior of alternating LoRA in decentralized settings.
\subsection{ADF-LoRA: Joint-Mixing Alternating LoRA}
To mitigate drift in decentralized alternating LoRA, we introduce
\texttt{ADF-LoRA}, which synchronizes both LoRA blocks in every round,
regardless of which block is being updated. After local updates, each client
performs joint mixing:
\[
A^{t+1}_i = \sum_j (W_t)_{ij} A^t_j,\qquad
B^{t+1}_i = \sum_j (W_t)_{ij} B^t_j.
\]
Joint mixing keeps the frozen block aligned across the network, preventing
phase-state mismatch and stabilizing alternating LoRA under decentralized
communication.
% To mitigate block drift caused by decentralized mixing, we adopt a simple but
% effective modification: \emph{both} LoRA blocks are mixed in every round,
% regardless of which block is being updated. Formally, after each local update, for client $i$,
% \[
% A^{t+1}_i = \sum_j (W_t)_{ij}A^t_j,\qquad  
% B^{t+1}_i = \sum_j (W_t)_{ij}B^t_j.
% \]
% This joint mixing maintains closer agreement among clients and keeps the frozen
% block synchronized across the network, stabilizing alternating LoRA under DFL.

Algorithm~\ref{alg:ADFLoRA} summarizes the full procedure.

\begin{algorithm}[t]
\caption{ADF-LoRA: Joint-Mixing Alternating LoRA in Decentralized FL}
\label{alg:ADFLoRA}
\begin{algorithmic}[1]
\Require Initial $(A_i^0,B_i^0)$; total rounds $K$; phase length $T$; mixing matrices $W_t$.
\For{$t=0,\ldots,K-1$}
    \If{$\lfloor t/T\rfloor$ is even}  \Comment{B-phase}
        \State Each client updates $B^t_i$ (e.g., AdamW); $A^t_i$ frozen
    \Else  \Comment{A-phase}
        \State Each client updates $A^t_i$; (e.g., AdamW); $B^t_i$ frozen
    \EndIf
    \State \textbf{Joint mixing for all clients:}
    \[
     A^{t+1}_i \gets \sum_j (W_t)_{ij}A^t_j,\;\; B^{t+1}_i \gets \sum_j (W_t)_{ij}B^t_j\]
\EndFor
\end{algorithmic}
\end{algorithm}

\subsection{Theoretical Remarks}

We begin by stating the regularity assumptions used in our decentralized
convergence analysis.

\begin{assumption}[Block-wise Regularity \cite{chen2025robust}]
For any fixed $A$, the mapping $B\mapsto\mathcal{L}(A,B)$ is $L$-smooth and
$\mu$-weakly convex. Symmetrically, for any fixed $B$, the mapping
$A\mapsto\mathcal{L}(A,B)$ is also $L$-smooth and $\mu$-weakly convex.
\end{assumption}

This assumption ensures that each alternating update behaves as a smooth (and
possibly nonconvex) coordinate-wise gradient step. Under this condition, the
classical descent lemma applies block-wise, and in the centralized setting
alternating LoRA reduces to a two-block coordinate descent method with the
standard $O(1/(TK))$ stationarity rate.

\begin{assumption}[Mixing Matrix]
The communication matrix $W$ is symmetric, doubly-stochastic, and satisfies
\[
\rho=\left\| W-\tfrac1N\mathbf{1}\mathbf{1}^\top \right\| < 1 .
\]
\end{assumption}

The constraint $\rho<1$ corresponds to the underlying communication graph
having a positive spectral gap. Consequently, repeated multiplication by $W$
contracts disagreements across clients at a geometric rate. Writing $U^t$ for
the stacked local parameters, the consensus deviation
$\|U^t - \bar U^t \mathbf{1}^\top\|$ measures the drift introduced by
decentralized communication.

Under these assumptions, our decentralized alternating LoRA method satisfies
the following convergence guarantee.

\begin{theorem}[Decentralized Convergence]
After $K$ full periods ($2KT$ block updates), the averaged iterate satisfies
\begin{align*}
\min_{0\le t<2KT}
\|\nabla\mathcal{L}(\bar A^t,\bar B^t)\|^2
&\;\le\;
\frac{2\bigl(\mathcal{L}(A^0,B^0)-\mathcal{L}^*\bigr)}{\eta\,2KT}
\\[0.3em]
&\quad+\;
\frac{2C}{2KT}
\sum_{t=0}^{2KT-1}
\bigl\|U^t-\bar U^t\mathbf1^\top\bigr\|^2 .
\end{align*}
Moreover, since
\[
\|U^t-\bar U^t\mathbf1^\top\|
\;\le\;
\rho^t\,\|U^0-\bar U^0\mathbf1^\top\|,
\]
the consensus error term vanishes geometrically as $t\to\infty$, and thus
$\nabla\mathcal{L}(\bar A^t,\bar B^t)\to 0$.
\end{theorem}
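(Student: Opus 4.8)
\emph{Proof plan.} The plan is to show that the network-averaged iterate $(\bar A^t,\bar B^t)$ executes an \emph{inexact} two-block coordinate-descent step whose inexactness is exactly the consensus deviation $\|U^t-\bar U^t\mathbf 1^\top\|$, and then to drive that deviation to zero using the spectral-gap contraction of Assumption~2. First I would record that, because every $W_t$ is doubly stochastic, averaging commutes with mixing, so the mixing step leaves the network mean unchanged and during an A-phase the mean evolves as $\bar A^{t+1}=\bar A^t-\tfrac{\eta}{N}\sum_i\nabla_A\mathcal L_i(A_i^t,B_i^t)$ with $\bar B^{t+1}=\bar B^t$, and symmetrically during a B-phase. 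Thus the mean follows a single active-block gradient step, except that the gradients are evaluated at the heterogeneous local points $(A_i^t,B_i^t)$ rather than at $(\bar A^t,\bar B^t)$; this mismatch is the sole source of decentralized error.

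Next I would apply the $L$-smoothness half of Assumption~1 to the active block and invoke the descent lemma for the mean, obtaining $\mathcal L(\bar A^{t+1},\bar B^{t+1})\le\mathcal L(\bar A^t,\bar B^t)-\eta\langle\nabla\mathcal L(\bar A^t,\bar B^t),\bar g^t\rangle+\tfrac{L\eta^2}{2}\|\bar g^t\|^2$, where $\bar g^t=\tfrac1N\sum_i\nabla\mathcal L_i(A_i^t,B_i^t)$ is the averaged active-block gradient. Using $L$-smoothness once more gives $\|\bar g^t-\nabla\mathcal L(\bar A^t,\bar B^t)\|\le\tfrac{L}{\sqrt N}\|U^t-\bar U^t\mathbf 1^\top\|$; splitting the inner product with Young's inequality and choosing a step size $\eta\le 1/(2L)$ then collapses the $\|\bar g^t\|^2$ and cross terms into $-\tfrac{\eta}{2}\|\nabla\mathcal L(\bar A^t,\bar B^t)\|^2+C_0\|U^t-\bar U^t\mathbf 1^\top\|^2$. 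This is the step I expect to be the main obstacle: one must keep careful track of which block is frozen so that the consecutive values $\mathcal L(\bar A^t,\bar B^t)$ and $\mathcal L(\bar A^{t+1},\bar B^{t+1})$ are genuinely comparable across a phase, and one must ensure the consensus correction enters only quadratically so that it can be summed afterward.

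I would then telescope the resulting per-round inequality over $t=0,\dots,2KT-1$, use $\mathcal L\ge\mathcal L^*$ to bound the telescoped function values, rearrange, and replace the running minimum of $\|\nabla\mathcal L(\bar A^t,\bar B^t)\|^2$ by its average. Dividing through by $\tfrac{\eta}{2}\cdot 2KT$ produces exactly the stated bound with $C=C_0/\eta$.

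Finally, for the consensus term I would exploit that ADF-LoRA mixes \emph{both} blocks every round: writing $\tilde U^t$ for the stacked post-local-update iterate, $U^{t+1}=\tilde U^t W_t^\top$, and projecting onto the disagreement subspace together with Assumption~2 gives $\|U^{t+1}-\bar U^{t+1}\mathbf 1^\top\|\le\rho\,\|\tilde U^t-\bar{\tilde U}^t\mathbf 1^\top\|$. Neglecting the per-round local perturbation (or, under a bounded-gradient assumption, absorbing it as a lower-order geometric tail), iterating this recursion yields $\|U^t-\bar U^t\mathbf 1^\top\|\le\rho^t\|U^0-\bar U^0\mathbf 1^\top\|$. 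Since $\rho<1$, the consensus sum in the bound is then a convergent geometric series of size $O(1/(KT))$ whose summand vanishes, so $\nabla\mathcal L(\bar A^t,\bar B^t)\to 0$. It is precisely here that joint mixing is essential: the active-block-only baseline never contracts the disagreement of the frozen block, which reproduces the block-wise drift discussed in Section~3.
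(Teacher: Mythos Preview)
Your proposal is correct and follows essentially the same route as the paper: use double stochasticity to track the network mean as an inexact block-coordinate step, apply the smoothness descent lemma and Young's inequality to absorb the gradient mismatch as a quadratic consensus term (this is exactly the paper's Lemma~2), telescope over $2KT$ steps and pass to the minimum, and invoke the spectral-gap contraction for the consensus error (the paper's Lemma~1). The only cosmetic differences are your step-size constant ($\eta\le 1/(2L)$ versus the paper's $\eta\le 1/L$) and your explicit acknowledgment of the per-round local-update perturbation in the consensus recursion, which the paper simply absorbs into its Lemma~1 statement without comment.
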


The theorem cleanly decomposes decentralized convergence into two effects:  
(i)~the standard $O(1/(TK))$ descent dynamics obtained in the centralized
setting, and (ii)~an additional consensus error term induced purely by graph
connectivity. When communication is centralized
($W=\tfrac1N\mathbf1\mathbf1^\top$ and hence $\rho=0$), the consensus term
vanishes identically, recovering the classical two-block coordinate descent
rate as a special case. Full derivations are provided in Appendix~\ref{appendix}.

\section{Experiments}

\subsection{Experimental Setup}
We evaluate the proposed method on four representative GLUE tasks (SST-2, QNLI, QQP, MNLI) under both centralized (CFL) and decentralized (DFL) federated learning.

\subsubsection{Model Configuration.}
% We use \textbf{RoBERTa-Large} (335M parameters) as the backbone model. 
% LoRA adapters are applied to the \textbf{Q} and \textbf{V} projections with rank $r=8$, scaling factor $\alpha=16$, and dropout $0.1$. 
% The classification head remains frozen.
We use \textbf{RoBERTa-Large} (335M) with LoRA applied to the \textbf{Q}/\textbf{V} projections ($r=8$, $\alpha=16$, dropout~0.1).  
The classification head is frozen.

\subsubsection{Federated Learning Settings.}
We consider:
\begin{itemize}
    \item \textbf{CFL}: A server aggregates LoRA updates each round.
    \item \textbf{DFL}: Clients communicate in a peer-to-peer topology. Each client encounters another client with probability~0.1 per round, and encountered pairs perform symmetric LoRA averaging.
\end{itemize}
Data partitions are identical across CFL and DFL.  
For binary tasks, 10 clients follow $3\times[0.9,0.1]$, $3\times[0.1,0.9]$, $4\times[0.5,0.5]$;  
for MNLI, $4\times[0.9,0.05,0.05]$, $3\times[0.05,0.9,0.05]$, $3\times[0.05,0.05,0.9]$.
% Data partitions are consistent across CFL and DFL.  
% For binary tasks (SST-2, QQP, QNLI), 10 clients follow class distributions  
% $3\times[0.9,0.1]$, $3\times[0.1,0.9]$, $4\times[0.5,0.5]$.  
% For MNLI (three-way), the distributions are  
% $4\times[0.9,0.05,0.05]$,  
% $3\times[0.05,0.9,0.05]$,  
% $3\times[0.05,0.05,0.9]$.

\subsubsection{Baselines.}
We compare:
\begin{itemize}
    \item \texttt{LoRA}: Naive LoRA fine-tuning.
    \item \texttt{FFA-LoRA}~\cite{sunimproving}: A frequency-filtered LoRA baseline.
    \item \texttt{RoLoRA}~\cite{chen2025robust}: Naive CFL alternating extends to DFL by aggregating only trainable parameters.
    \item \texttt{ADF-LoRA-$T$}: our method with $T\in\{1,5,10,20\}$.
\end{itemize}

\subsubsection{Training Details.}
Each FL round uses \textbf{20 local steps}, for \textbf{150 rounds} total.  
We use AdamW (HuggingFace defaults), sequence length 128, batch size 32, and search learning rate over  
$\{5\times10^{-4}, 10^{-3}, 2\times10^{-3}, 5\times10^{-3}\}$.  
All experiments run on NVIDIA A100/V100/RTX8000 clusters.
\subsubsection{Evaluation Protocol.}
In CFL, a global model is available; thus, evaluation is performed directly on the aggregated global model for each random seed.

In DFL, no global model exists. For each seed, we evaluate all 10 client models and compute the \textbf{mean accuracy across clients}.  
We then report the \textbf{mean and standard deviation across random seeds} as the final performance.

% ============================================================
\subsection{Centralized FL: Sanity Check}
To verify correctness before decentralized evaluation, we benchmark all methods under CFL.
Figure~\ref{fig:cfl_sanity_plot} shows accuracy on four GLUE tasks with standard deviation across seeds.

The results indicate:
(1) \texttt{RoLoRA} matches the accuracy of full \texttt{LoRA},  
(2) \texttt{RoLoRA} consistently improves over \texttt{FFA-LoRA},  
suggesting that alternating updates do not degrade performance under centralized aggregation.

\begin{figure}[t]
    \centering
    \includegraphics[width=0.95\linewidth]{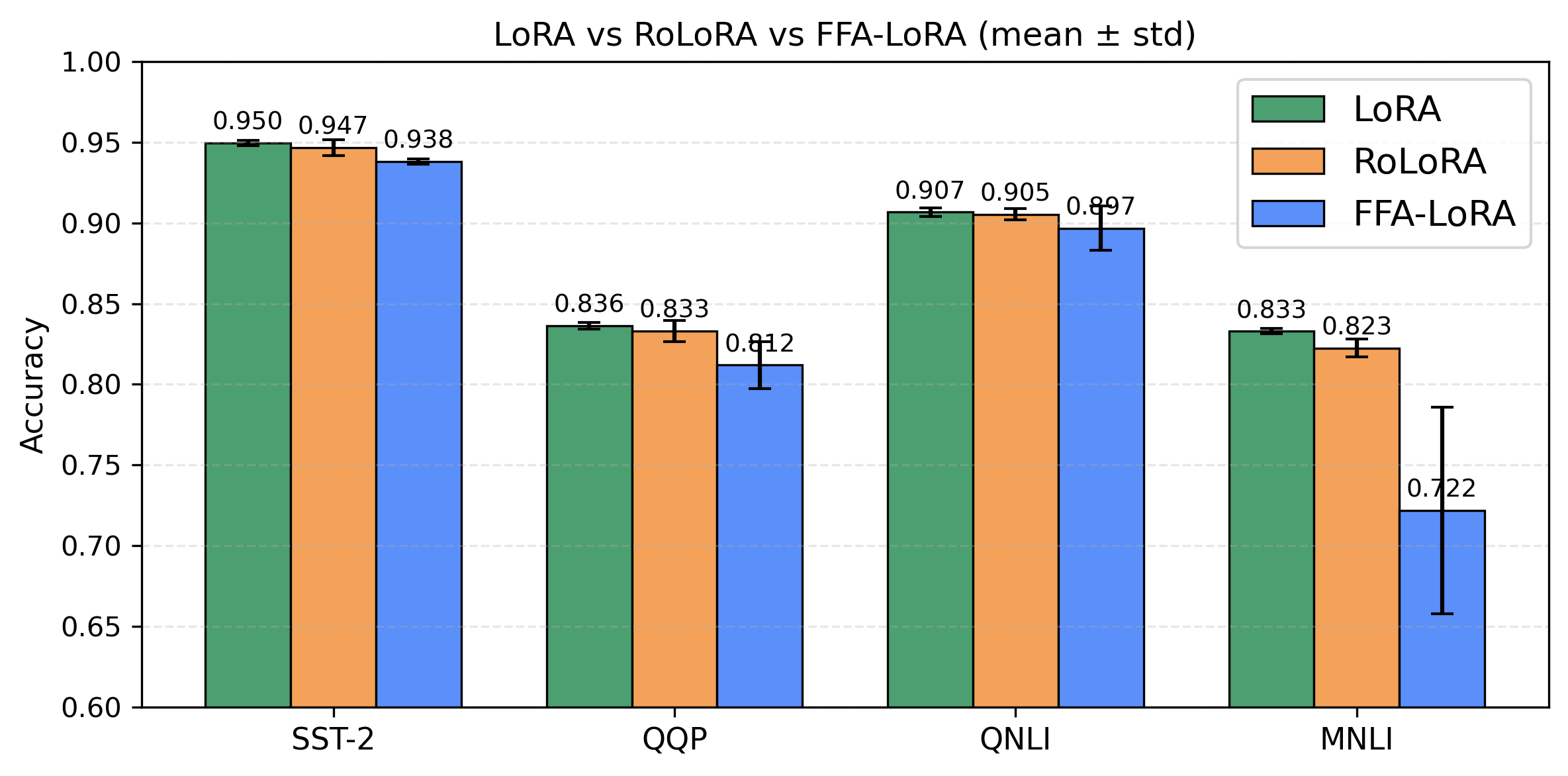}
    \caption{Accuracy and standard deviation across seeds for LoRA variants on four GLUE tasks under centralized FL.}
    \label{fig:cfl_sanity_plot}
\end{figure}

\subsection{Decentralized FL Results}

Evaluating convergence every round in DFL is computationally expensive for QQP and MNLI.  
Following common FL practice, we plot convergence curves only for QNLI and report final accuracy for all tasks.

\begin{figure}[t]
    \centering
    \includegraphics[width=0.95\linewidth]{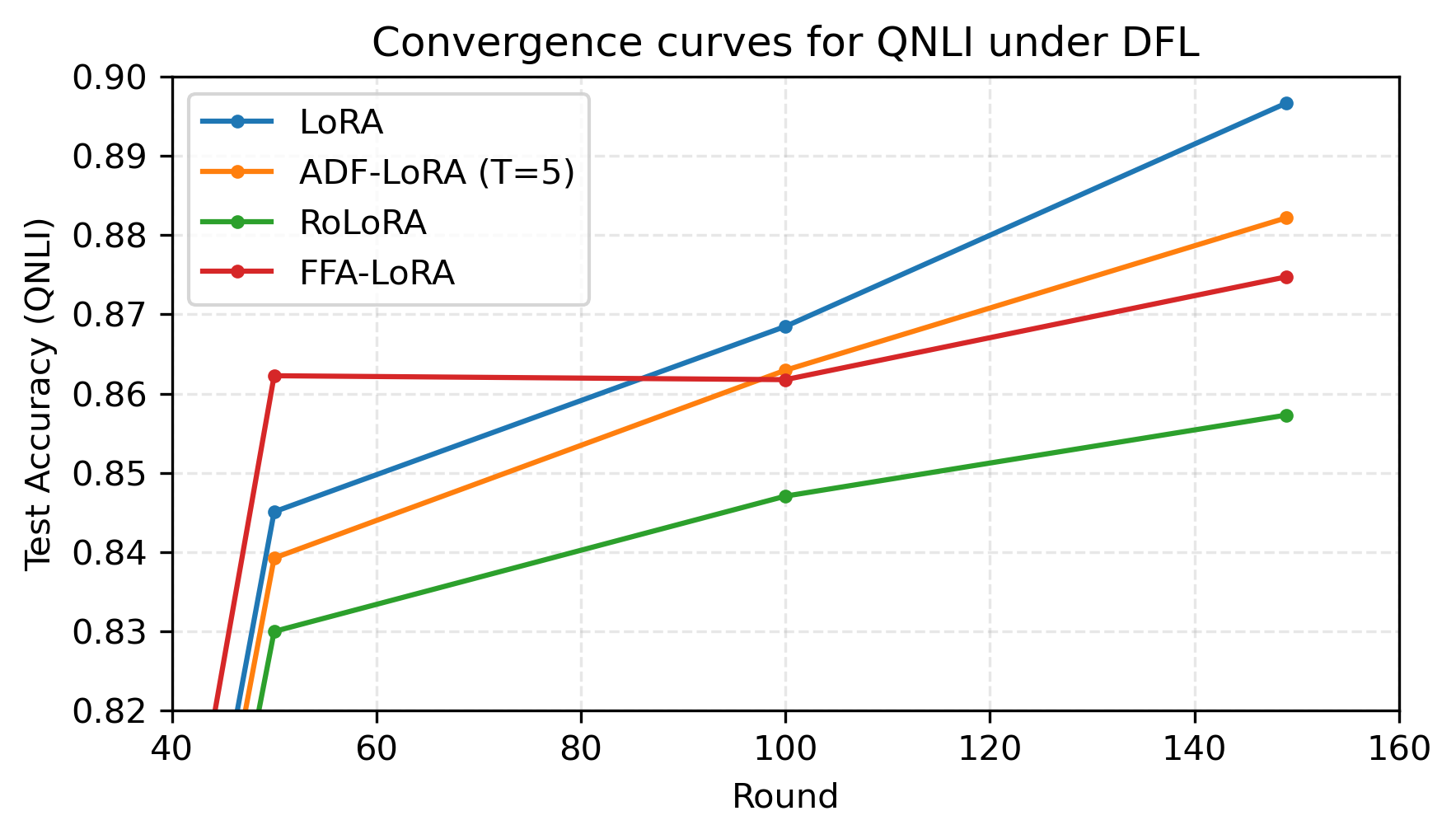}
    \caption{Convergence curves for QNLI under DFL.}
    \label{fig:qnli_curve}
\end{figure}
\subsubsection{Convergence Curves (QNLI)}
Figures~\ref{fig:qnli_curve} show the mean accuracy across 10 clients.  
\texttt{ADF-LORA} exhibit faster and more stable convergence compared with \texttt{RoLoRA} and \texttt{FFA-LoRA}.  
The alternating update scheme in \texttt{ADF-LORA} also suppresses the oscillation
commonly observed in decentralized training.
\begin{table*}[t]
\centering
\caption{Final accuracy ($\text{mean} \pm \text{std}$) after 150 DFL rounds. Best results in \textbf{bold}, second-best are \underline{underlined}.}
\label{tab:final_accuracy}
\begin{tabular}{lccccc}
\toprule
Method & SST-2 & QNLI & QQP & MNLI & Avg \\
\midrule
LoRA & 
$\mathbf{0.9482} \pm 0.0018$ &
$\mathbf{0.8970} \pm 0.0004$ &
$\underline{0.8077} \pm 0.0097$ &
$\underline{0.7304} \pm 0.0267$ &
\underline{$0.8458$} \\

FFA-LoRA &
$0.9329 \pm 0.0017$ &
$0.8758 \pm 0.0015$ &
$0.7926 \pm 0.0143$ &
$0.7058 \pm 0.0055$ &
$0.8268$ \\

RoLoRA &
$0.9354 \pm 0.0051$ &
$0.8685 \pm 0.0159$ &
$0.7915 \pm 0.0074$ &
{$0.7184 \pm 0.0025$} &
$0.8284$ \\

\rowcolor{gray!12}
ADF-LoRA (ours, $T=5$) &
$\underline{0.9422} \pm 0.0041$ &
$\underline{0.8826} \pm 0.0005$ &
$\mathbf{0.8129} \pm 0.0022$ &
$\mathbf{0.7624} \pm 0.0137$ &
$\mathbf{0.8505}$ \\
\bottomrule
\end{tabular}
\end{table*}
\subsubsection{Final Accuracy on Four GLUE Tasks}
Table~\ref{tab:final_accuracy} reports the mean~$\pm$~std accuracy across 10 clients.  
With an interval of $T=5$, \texttt{ADF-LoRA} attains the highest average accuracy and the strongest gains on QQP and MNLI, while remaining competitive on SST-2 and QNLI.

\texttt{ADF-LoRA} combines the strengths of alternating LoRA updates with a more effective decentralized mixing strategy.  
Like RoLoRA, it removes the cross-term noise introduced by naive LoRA aggregation, but it additionally mixes both the updated and frozen blocks at every peer interaction.  
This design eliminates the \emph{phase-state mismatch} between clients updating different blocks and prevents the frozen block from drifting or becoming stale, thereby reducing block-wise inconsistency under sparse communication.  
By simultaneously suppressing cross-term noise and mitigating mismatch-induced drift, \texttt{ADF-LoRA} achieves consistently better performance than RoLoRA and even matches or surpasses full LoRA on the more complex QQP and MNLI tasks.

\begin{table}[t]
\centering
\caption{Ablation on interval $T$: final accuracy (mean only). Best results are in \textbf{bold}.}
\label{tab:ablation_T}
\begin{tabular}{lccccc}
\toprule
$T$ & SST-2 & QNLI & QQP & MNLI & Avg \\
\midrule
1  & 0.9393 & 0.8779 & 0.8056 & 0.7512 & 0.8435 \\
\textbf{5}  & \textbf{0.9422} & \textbf{0.8826} & \textbf{0.8129} & \textbf{0.7624} & \textbf{0.8505} \\
10 & 0.9362 & 0.8785 & 0.8032 & 0.7395 & 0.8393 \\
20 & 0.9348 & 0.8777 & 0.8124 & 0.7454 & 0.8421 \\
\bottomrule
\end{tabular}
\end{table}

\subsection{Ablation Study: Impact of Interval $T$}

\subsubsection{Convergence Behavior}
Figures~\ref{fig:ablation_T_curve} compare $T=1, 5, 10, 20$ on QNLI.  
Smaller $T$ (e.g., $1$) produces slower improvement, while larger $T$ (e.g., $20$) introduces noticeable fluctuations.  
Moderate intervals ($T=5$ or $T=10$) achieve the most stable convergence.

\subsubsection{Final Accuracy}
Table~\ref{tab:ablation_T} shows that the alternating interval $T$ has a clear impact on performance.  
A moderate interval achieves the best results, with $T=5$ giving the highest average accuracy across tasks.  
Both very frequent switching ($T=1$) and overly delayed switching ($T=20$) lead to suboptimal performance, indicating that an appropriate switching frequency is essential for balancing block-wise stability and cross-block coupling.  
Determining how to choose the optimal interval in a task- or system-aware manner is an interesting direction for future work.
\begin{figure}[t]
    \centering
     \includegraphics[width=0.95\linewidth]{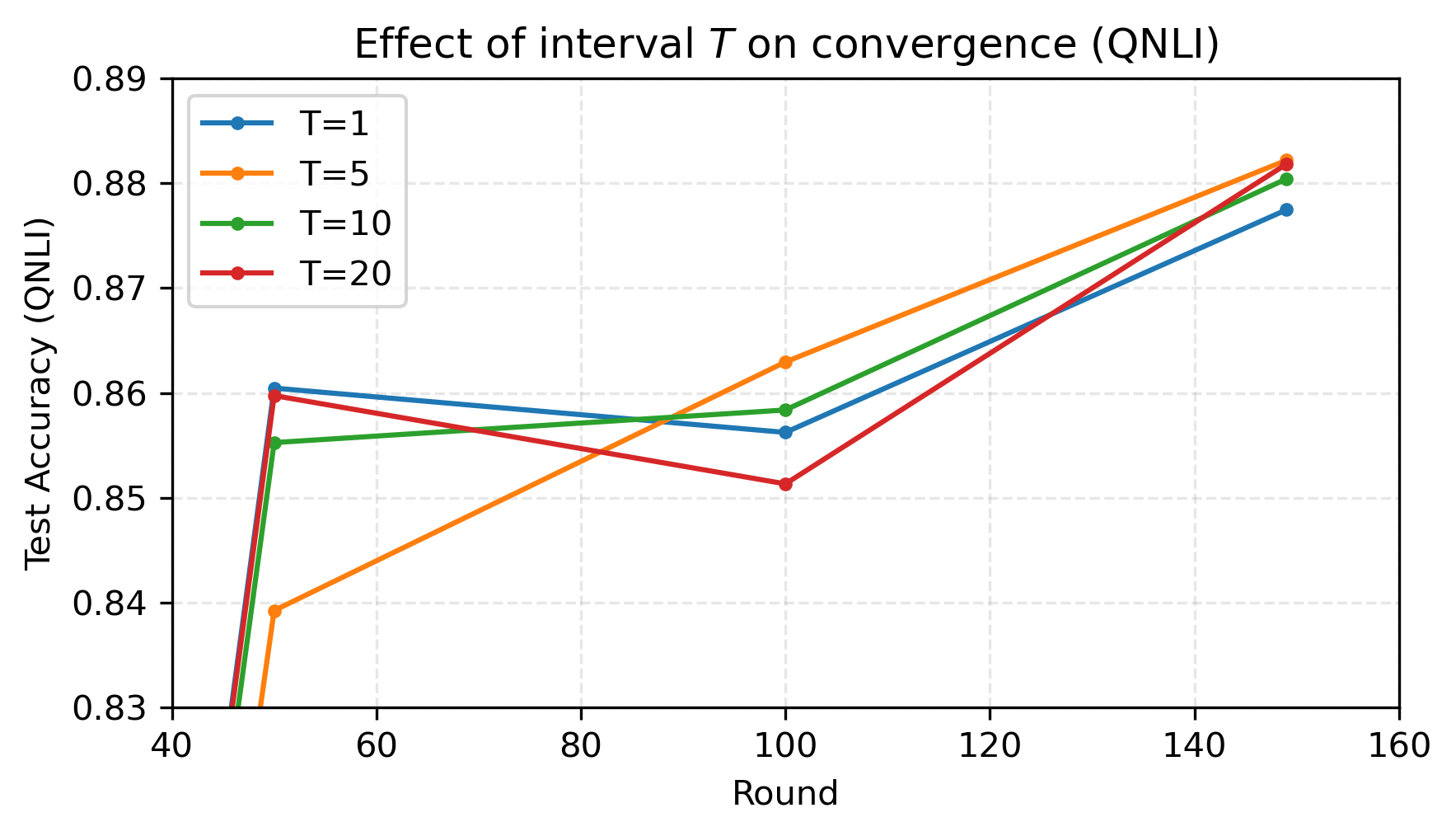}
    \caption{Effect of interval $T$ on convergence on QNLI.}
    \label{fig:ablation_T_curve}
\end{figure}

\subsection{Summary of Findings}

Our experiments yield the following key observations:

\begin{itemize}

    \item \textbf{CFL sanity check.}  
    RoLoRA matches full LoRA and outperforms FFA-LoRA, confirming that alternating updates behave correctly under centralized aggregation.

    \item \textbf{DFL advantages.}  
    In decentralized training, \texttt{ADF-LoRA} converges faster and more smoothly than RoLoRA and FFA-LoRA, especially on QNLI.  
    Its improved mixing strategy helps maintain consistent LoRA directions across clients.

    \item \textbf{Gains on difficult tasks.}  
    \texttt{ADF-LoRA} achieves the best results on QQP and MNLI and the highest overall average accuracy, showing particular advantages on tasks with higher optimization complexity.

    \item \textbf{Optimal interval $T$.}  
    Moderate switching intervals (notably $T=5$) perform best, while both very frequent ($T=1$) and infrequent ($T=20$) switching degrade performance, suggesting the need for an appropriate interval.
\end{itemize}

Overall, \texttt{ADF-LoRA} refines alternating LoRA updates by retaining RoLoRA’s cross-term noise reduction while improving consistency under decentralized mixing, resulting in more stable and accurate decentralized fine-tuning.

\section{Conclusion}
We introduced \texttt{ADF-LoRA}, a refined alternating update framework for decentralized federated fine-tuning. 
\texttt{ADF-LoRA} preserves RoLoRA’s cross-term noise reduction while improving the consistency of alternating updates under sparse peer-to-peer communication through coordinated mixing of both LoRA blocks. 
This design enables stable decentralized optimization and achieves strong empirical performance across a range of GLUE tasks.

\textbf{Future Work.}
Promising directions include adaptive switching policies, scaling \texttt{ADF-LoRA} to larger language models, and studying its behavior under other  decentralized topologies, asynchronous communication protocols, and large-scale peer-to-peer environments.

\bibliographystyle{IEEEtran}
\bibliography{reference}

% \newpage
\onecolumn
\appendices
\section{Convergence of Alternating LoRA SGD in Decentralized FL}\label{appendix}
We optimize
\[
\mathcal{L}(A,B)
= \frac{1}{N}\sum_{i=1}^N \mathcal{L}_i(A,B),
\]
with LoRA blocks $A\in\mathbb R^{r\times d}$, $B\in\mathbb R^{d\times r}$. Agents
communicate via a fixed, symmetric doubly-stochastic matrix
$W\in\mathbb R^{N\times N}$ with spectral gap
$\rho=\|W-\tfrac1N\mathbf1\mathbf1^\top\|<1$.

\begin{definition}[$L$-smoothness {\cite{xie2019asynchronous}}]
A differentiable $f$ is $L$-smooth if
\[
f(y)\le f(x)+\langle\nabla f(x),\,y-x\rangle+\frac{L}{2}\|y-x\|^2.
\]
\end{definition}

\begin{definition}[$\mu$-weak convexity {\cite{xie2019asynchronous}}]
$f$ is $\mu$-weakly convex if $f(x)+\tfrac\mu2\|x\|^2$ is convex.
\end{definition}

\begin{assumption}[Block-wise Regularity]
For any fixed $A$, $B\mapsto\mathcal{L}(A,B)$ is $L$-smooth and $\mu$-weakly
convex; and vice versa.
\end{assumption}

\begin{assumption}[Mixing Matrix]
$W$ is symmetric, doubly-stochastic, and
$\rho=\|W-\tfrac1N\mathbf1\mathbf1^\top\|<1$.
\end{assumption}

\begin{assumption}[SGD Step Size]
Agents use SGD with a constant step size $\eta \le 1/L$.\footnote{%
While our experiments use AdamW~\cite{loshchilov2018decoupled},
our theoretical analysis considers vanilla SGD for clarity. 
Recent work shows that adaptive gradient methods such as Adam
exhibit convergence behavior similar to SGD under standard
smoothness and bounded-variance assumptions
(e.g., \cite{j.2018on}). 
Thus, the theoretical insights for alternating LoRA are expected to
extend to Adam-type optimizers in practice.}
\end{assumption}

Let each period be $2T$ steps: for $t=0,\dots,2KT-1$ define
\[
u_t =
\begin{cases}
B, & t\bmod(2T)<T,\\
A, & t\bmod(2T)\ge T,
\end{cases}
\]
and let each agent $i$ perform a local update on the active block:
\[
u_i^{t+\tfrac12}=u_i^t-\eta\,\nabla_{u_t}\mathcal{L}_i(A^t_i,B^t_i).
\]
After each local update, \emph{both} LoRA blocks are mixed:
\[
A_i^{t+1}=\sum_{j=1}^N W_{ij} A_j^{t+\tfrac12}, \qquad
B_i^{t+1}=\sum_{j=1}^N W_{ij} B_j^{t+\tfrac12}.
\]
Denote averages $\bar A^t=\tfrac1N\sum_i A_i^t$ and
$\bar B^t=\tfrac1N\sum_i B_i^t$.

\begin{lemma}[Consensus Error Decay Under Joint Mixing]
Under the joint mixing update above, both blocks satisfy the same contraction:
\[
\|U^{t+1}-\bar U^{t+1}\mathbf1^\top\|
\;\le\;
\rho\,\|U^{t}-\bar U^{t}\mathbf1^\top\|,
\qquad U\in\{A,B\},
\]
where $\bar U^t=\tfrac1N\sum_i U_i^t$.
\end{lemma}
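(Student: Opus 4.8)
\emph{Proof plan.} The plan is to view the joint-mixing step as a linear map that, because $W$ is doubly-stochastic, leaves the network average fixed while contracting any residual disagreement by exactly $\rho$. Fix one block $U\in\{A,B\}$ and stack the local copies into a single matrix $U^t=[U_1^t,\dots,U_N^t]$ whose $i$-th column is agent $i$'s (flattened) block, so that $\bar U^t=\tfrac1N U^t\mathbf1$ and the mixing step reads $U^{t+1}=U^{t+\frac12}W$ (using symmetry of $W$), where $U^{t+\frac12}$ is the post-local-update state: for the frozen block $U^{t+\frac12}=U^t$, and for the active block $U^{t+\frac12}=U^t-\eta G^t$ with $G^t$ the stacked gradient. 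First I would record that $W\mathbf1=\mathbf1$ implies $\bar U^{t+1}=\tfrac1N U^{t+\frac12}W\mathbf1=\bar U^{t+\frac12}$, i.e., mixing preserves the mean.

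Next I would isolate the deviation. Writing $J=\tfrac1N\mathbf1\mathbf1^\top$, double-stochasticity gives $\mathbf1^\top W=\mathbf1^\top$, and since $U^{t+\frac12}J=\bar U^{t+\frac12}\mathbf1^\top$ a short computation yields
\[
U^{t+1}-\bar U^{t+1}\mathbf1^\top
=U^{t+\frac12}W-U^{t+\frac12}J
=\bigl(U^{t+\frac12}-\bar U^{t+\frac12}\mathbf1^\top\bigr)(W-J).
\]
Submultiplicativity of the spectral norm together with the Mixing Matrix assumption then gives
\[
\bigl\|U^{t+1}-\bar U^{t+1}\mathbf1^\top\bigr\|
\;\le\;\rho\,\bigl\|U^{t+\frac12}-\bar U^{t+\frac12}\mathbf1^\top\bigr\| .
\]
For the frozen block $U^{t+\frac12}=U^t$, so this is verbatim the claimed inequality. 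The conceptual point to emphasize is that the \emph{same} matrix $W$ — hence the \emph{same} factor $\rho$, which depends only on $W$ — is applied to \emph{both} blocks; the decentralized baseline, which mixes only the active block, never contracts the frozen block's deviation, which is precisely the block-wise drift identified earlier.

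The only step that needs care is the active block, and I expect it to be the main obstacle: there $U^{t+\frac12}-\bar U^{t+\frac12}\mathbf1^\top=(U^t-\bar U^t\mathbf1^\top)-\eta\,(G^t-\bar G^t\mathbf1^\top)$, so the bound above strictly reads $\rho\bigl(\|U^t-\bar U^t\mathbf1^\top\|+\eta\|G^t-\bar G^t\mathbf1^\top\|\bigr)$, and the clean recursion $\|U^{t+1}-\bar U^{t+1}\mathbf1^\top\|\le\rho\|U^t-\bar U^t\mathbf1^\top\|$ holds exactly only up to this gradient-heterogeneity term. Under the block-wise $L$-smoothness of the regularity assumption and the step-size bound $\eta\le 1/L$, that term is of the same order as the stationarity quantities already present, so it is carried along and absorbed when the consensus contribution is later combined with the descent estimate; in the centralized special case $W=J$ it vanishes since $\rho=0$. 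I would close by noting that unrolling the recursion from $t=0$ recovers $\|U^t-\bar U^t\mathbf1^\top\|\le\rho^t\|U^0-\bar U^0\mathbf1^\top\|$, and that summing the per-round bound over a full period of $2T$ steps and over $U\in\{A,B\}$ furnishes the aggregate consensus-error control used in the theorem.
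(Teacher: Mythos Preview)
The paper states this lemma without proof, so there is no detailed argument to compare against; your derivation via $U^{t+1}-\bar U^{t+1}\mathbf1^\top=(U^{t+\frac12}-\bar U^{t+\frac12}\mathbf1^\top)(W-J)$ and the spectral-gap bound is the standard one and is correct.

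You are also right to flag that the inequality as literally stated holds cleanly only for the \emph{frozen} block, where $U^{t+\frac12}=U^t$. For the active block the gradient-heterogeneity term $\eta\,(G^t-\bar G^t\mathbf1^\top)$ enters, and the paper's later recursive application $\|U^t-\bar U^t\mathbf1^\top\|\le\rho^t\|U^0-\bar U^0\mathbf1^\top\|$ in the theorem proof tacitly ignores it. Your plan to carry that residual forward and absorb it using block-wise $L$-smoothness and $\eta\le 1/L$ is the honest way to close the gap, and is in fact more careful than the paper's own presentation; it does not change the order of the final bound, since the accumulated gradient disagreement is controlled by the same consensus quantities already appearing in Lemma~2.
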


\begin{lemma}[Descent with Consensus Error]
Under Assumptions~1–4, there exists a constant $C>0$ such that for each step $t$,
\[
\mathcal{L}(\bar A^{t+1},\bar B^{t+1})
\le
\mathcal{L}(\bar A^t,\bar B^t)
-\frac{\eta}{2}\bigl\|\nabla_{u_t}\mathcal{L}(\bar A^t,\bar B^t)\bigr\|^2
+ C\,\bigl\|U^{t}-\bar U^{t}\mathbf1^\top\bigr\|^2,
\]
where $u_t\in\{A,B\}$ is the block updated at step $t$ and $U^t$ is the
corresponding stacked local variable.
\end{lemma}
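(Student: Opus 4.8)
The plan is to prove the inequality by a one-step descent argument on the active block, with the consensus deviation of the local iterates entering as a perturbation. Without loss of generality take step $t$ to be a B-phase, so $u_t=B$; the A-phase follows by swapping the roles of the two blocks. First I would track how the averaged iterates evolve over one round. Because $W$ is doubly-stochastic and no local gradient step modifies $A$ during a B-phase, mixing preserves the $A$-average and $\bar A^{t+1}=\bar A^t$. For the active block, averaging the local half-steps $B_i^{t+\frac12}=B_i^t-\eta\nabla_B\mathcal{L}_i(A_i^t,B_i^t)$ and using column-stochasticity of $W$ gives $\bar B^{t+1}=\bar B^t-\eta\,\widehat g^t$, where $\widehat g^t:=\tfrac1N\sum_i\nabla_B\mathcal{L}_i(A_i^t,B_i^t)$ is the averaged \emph{local} gradient, evaluated at the \emph{local} parameters rather than at the averaged ones.

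Next I would apply the block-wise $L$-smoothness of $B\mapsto\mathcal{L}(\bar A^t,B)$ to obtain the quadratic upper bound
\[
\mathcal{L}(\bar A^{t+1},\bar B^{t+1})=\mathcal{L}(\bar A^t,\bar B^{t+1})\le\mathcal{L}(\bar A^t,\bar B^t)-\eta\,\langle\nabla_B\mathcal{L}(\bar A^t,\bar B^t),\widehat g^t\rangle+\tfrac{L\eta^2}{2}\|\widehat g^t\|^2 .
\]
Writing $\widehat g^t=\nabla_B\mathcal{L}(\bar A^t,\bar B^t)+e^t$ with residual $e^t:=\widehat g^t-\nabla_B\mathcal{L}(\bar A^t,\bar B^t)$, the polarization identity $-\langle a,a+e\rangle=\tfrac12\|e\|^2-\tfrac12\|a\|^2-\tfrac12\|a+e\|^2$ rewrites the cross term, and after collecting the $\|\widehat g^t\|^2$ contributions one gets
\[
\mathcal{L}(\bar A^{t+1},\bar B^{t+1})\le\mathcal{L}(\bar A^t,\bar B^t)-\tfrac{\eta}{2}\|\nabla_B\mathcal{L}(\bar A^t,\bar B^t)\|^2-\tfrac{\eta}{2}(1-L\eta)\|\widehat g^t\|^2+\tfrac{\eta}{2}\|e^t\|^2 .
\]
Under the step-size condition $\eta\le 1/L$ the third term is nonpositive and is discarded, which already produces the advertised descent $-\tfrac{\eta}{2}\|\nabla_{u_t}\mathcal{L}(\bar A^t,\bar B^t)\|^2$ together with the residual $\tfrac{\eta}{2}\|e^t\|^2$.

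It remains to bound $\|e^t\|^2$ by a constant multiple of the consensus deviation. By Jensen's inequality $\|e^t\|^2\le\tfrac1N\sum_i\|\nabla_B\mathcal{L}_i(A_i^t,B_i^t)-\nabla_B\mathcal{L}_i(\bar A^t,\bar B^t)\|^2$, and each summand splits, via the triangle inequality, into a $B$-drift part $\|\nabla_B\mathcal{L}_i(\bar A^t,B_i^t)-\nabla_B\mathcal{L}_i(\bar A^t,\bar B^t)\|\le L\|B_i^t-\bar B^t\|$, handled directly by block smoothness, and an $A$-drift part $\|\nabla_B\mathcal{L}_i(A_i^t,B_i^t)-\nabla_B\mathcal{L}_i(\bar A^t,B_i^t)\|$, which is controlled by the cross-block Lipschitz property $\|\nabla_B\mathcal{L}_i(A,B)-\nabla_B\mathcal{L}_i(A',B)\|\le L'\|A-A'\|$ with $L'$ depending on $L$ and $\mu$ (a block-$L$-smooth, $\mu$-weakly convex function has Lipschitz block gradients). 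This yields $\|e^t\|^2\le\tfrac{c}{N}\sum_i\bigl(\|A_i^t-\bar A^t\|^2+\|B_i^t-\bar B^t\|^2\bigr)=\tfrac{c}{N}\,\|U^t-\bar U^t\mathbf1^\top\|^2$ for a constant $c=c(L,\mu)$, where $\|U^t-\bar U^t\mathbf1^\top\|^2$ collects the consensus deviations of both blocks. Setting $C:=\tfrac{\eta c}{2N}$ (or any larger constant) finishes the B-phase, and interchanging $A$ and $B$ gives the A-phase.

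The step I expect to be the main obstacle is this last one: the residual $e^t$ depends on the drift of \emph{both} LoRA factors, so a bound of the stated form requires the \emph{cross-block} Lipschitz estimate on $\nabla_B\mathcal{L}_i$, which the block-wise regularity assumption supplies only for a fixed second argument. One should therefore either add an explicit joint-smoothness hypothesis or derive the cross-block bound from block smoothness and weak convexity, and then be precise about which consensus terms ($A$, $B$, or both) appear on the right-hand side—i.e., about the intended meaning of $U^t$ in the statement. The remaining ingredients—doubly-stochastic averaging of the local steps and the descent lemma—are entirely routine.
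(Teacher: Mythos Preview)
Your proposal is correct and follows essentially the same route as the paper: track the averaged iterate through the doubly-stochastic mixing, apply block-wise $L$-smoothness, decompose the averaged local gradient as the centralized gradient plus a residual $e^t$ (the paper's $\Delta^t$), and bound that residual by the consensus deviation. The only cosmetic differences are that the paper handles the cross term with Cauchy--Schwarz and Young's inequality rather than your (tighter) polarization identity, and it disposes of the final $\|\Delta^t\|^2\le C_2\|U^t-\bar U^t\mathbf1^\top\|^2$ step---including the cross-block Lipschitz issue you correctly flag---by citing ``standard arguments in decentralized optimization'' rather than spelling it out.
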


\begin{proof}
For clarity we first treat the deterministic case where each agent uses the
exact local gradient $\nabla_{u_t}\mathcal{L}_i(A_i^t,B_i^t)$; the extension to
stochastic gradients with bounded variance follows standard arguments in
decentralized SGD.

Let $u_t$ be the active block at step $t$ and fix the other block. Denote
\[
g_i^t \;=\; \nabla_{u_t}\mathcal{L}_i(A_i^t,B_i^t),\qquad
g^t \;=\; \frac1N\sum_{i=1}^N g_i^t.
\]
The local gradient step is
\[
u_i^{t+\frac12}=u_i^t-\eta g_i^t.
\]
Since for the active block we have
$u_i^{t+1} = \sum_j W_{ij} u_j^{t+\frac12}$ and $W$ is doubly stochastic,
the average after the mixing step is
\[
\bar u^{t+1}
=\frac1N\sum_{i=1}^N u_i^{t+1}
=\frac1N\sum_{i=1}^N u_i^{t+\frac12}
=\bar u^t-\eta g^t.
\]
Moreover the average of the non–updated block remains unchanged, so
\[
(\bar A^{t+1},\bar B^{t+1})
=(\bar A^{t},\bar B^{t})-\eta\,e_t,
\]
where $e_t$ is the vector that has $g^t$ in the coordinates of $u_t$ and zeros
in the other block.
where $e_t$ is the vector that has $g^t$ in the coordinates of $u_t$ and zeros in the other block.

By block-wise $L$–smoothness (Assumption~1),
\[
\begin{aligned}
\mathcal{L}(\bar A^{t+1},\bar B^{t+1})
&\le
\mathcal{L}(\bar A^t,\bar B^t)
+\left\langle \nabla_{u_t}\mathcal{L}(\bar A^t,\bar B^t),\,\bar u^{t+1}-\bar u^t\right\rangle
+\frac{L}{2}\|\bar u^{t+1}-\bar u^t\|^2 \\
&=
\mathcal{L}(\bar A^t,\bar B^t)
-\eta\left\langle \nabla_{u_t}\mathcal{L}(\bar A^t,\bar B^t),\,g^t\right\rangle
+\frac{L\eta^2}{2}\|g^t\|^2 .
\end{aligned}
\]
We now decompose $g^t$ as
\[
g^t
=\nabla_{u_t}\mathcal{L}(\bar A^t,\bar B^t)+\Delta^t,
\quad
\Delta^t
:=g^t-\nabla_{u_t}\mathcal{L}(\bar A^t,\bar B^t).
\]
Substituting and expanding,
\[
\begin{aligned}
\mathcal{L}(\bar A^{t+1},\bar B^{t+1})
&\le
\mathcal{L}(\bar A^t,\bar B^t)
-\eta\bigl\|\nabla_{u_t}\mathcal{L}(\bar A^t,\bar B^t)\bigr\|^2
-\eta\left\langle \nabla_{u_t}\mathcal{L}(\bar A^t,\bar B^t),\,\Delta^t\right\rangle\\
&\quad
+\frac{L\eta^2}{2}\Bigl(\bigl\|\nabla_{u_t}\mathcal{L}(\bar A^t,\bar B^t)\bigr\|^2
+2\bigl\langle\nabla_{u_t}\mathcal{L}(\bar A^t,\bar B^t),\Delta^t\bigr\rangle
+\|\Delta^t\|^2\Bigr).
\end{aligned}
\]
Collecting the coefficients of $\|\nabla_{u_t}\mathcal{L}\|^2$ and using $\eta\le 1/L$, we obtain
\[
-\eta+\frac{L\eta^2}{2}\le -\frac{\eta}{2}.
\]
Using Cauchy–Schwarz and Young’s inequality with a small constant, we can absorb the mixed terms
$\langle\nabla_{u_t}\mathcal{L},\Delta^t\rangle$ into $\|\nabla_{u_t}\mathcal{L}\|^2$ and $\|\Delta^t\|^2$,
which yields
\[
\mathcal{L}(\bar A^{t+1},\bar B^{t+1})
\le
\mathcal{L}(\bar A^t,\bar B^t)
-\frac{\eta}{2}\bigl\|\nabla_{u_t}\mathcal{L}(\bar A^t,\bar B^t)\bigr\|^2
+ C_1\|\Delta^t\|^2
\]
for some $C_1>0$.

Finally, standard arguments in decentralized optimization (e.g., \cite{xie2019asynchronous,lian2017can}) show that the gradient disagreement $\|\Delta^t\|$ can be bounded by the consensus error,
\[
\|\Delta^t\|^2
\le C_2\,\bigl\|U^{t}-\bar U^{t}\mathbf1^\top\bigr\|^2
\]
for some $C_2>0$ depending on the Lipschitz constants of $\nabla\mathcal{L}_i$.
Setting $C=C_1C_2$ gives the desired inequality.
\end{proof}

\begin{theorem}[Decentralized Convergence]
After $K$ full periods ($2KT$ steps),
\[
\min_{0\le t<2KT}\|\nabla\mathcal{L}(\bar A^t,\bar B^t)\|^2
\;\le\;
\frac{2\bigl(\mathcal{L}(A^0,B^0)-\mathcal{L}^*\bigr)}{\eta\,2KT}
\;+\;
\frac{2C}{2KT}\sum_{t=0}^{2KT-1}\bigl\|U^t-\bar U^t\mathbf1^\top\bigr\|^2.
\]
Moreover, since $\|U^t-\bar U^t\mathbf1^\top\|\le\rho^t\|U^0-\bar U^0\mathbf1^\top\|$, the consensus error term vanishes as $t\to\infty$, and thus $\nabla\mathcal{L}(\bar A^t,\bar B^t)\to0$.
\end{theorem}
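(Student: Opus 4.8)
The plan is to derive the bound by telescoping the per-step descent inequality (the ``Descent with Consensus Error'' lemma) over one full horizon of $2KT$ steps, convert the cumulative bound into a guarantee on the running minimum, and then feed in the geometric consensus decay (the ``Consensus Error Decay Under Joint Mixing'' lemma) to obtain the vanishing claim.

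First I would sum the descent inequality over $t=0,\dots,2KT-1$. The terms $\mathcal{L}(\bar A^{t+1},\bar B^{t+1})-\mathcal{L}(\bar A^t,\bar B^t)$ telescope to $\mathcal{L}(\bar A^{2KT},\bar B^{2KT})-\mathcal{L}(A^0,B^0)$, which is at least $\mathcal{L}^*-\mathcal{L}(A^0,B^0)$ since $\mathcal{L}\ge\mathcal{L}^*$ and $(\bar A^0,\bar B^0)=(A^0,B^0)$. Rearranging yields
\[
\frac{\eta}{2}\sum_{t=0}^{2KT-1}\bigl\|\nabla_{u_t}\mathcal{L}(\bar A^t,\bar B^t)\bigr\|^2
\;\le\;
\mathcal{L}(A^0,B^0)-\mathcal{L}^*
\;+\; C\sum_{t=0}^{2KT-1}\bigl\|U^t-\bar U^t\mathbf1^\top\bigr\|^2 .
\]
Dividing by $\eta KT$ and lower-bounding the average by its minimum gives a bound of the stated form, but \emph{a priori} only on the active-block gradient $\min_t\|\nabla_{u_t}\mathcal{L}(\bar A^t,\bar B^t)\|^2$ rather than on the full gradient $\|\nabla\mathcal{L}\|^2=\|\nabla_A\mathcal{L}\|^2+\|\nabla_B\mathcal{L}\|^2$ (constants being absorbed into $C$).

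Next I would upgrade the active-block bound to a full-gradient bound. The key observation is that each period of length $2T$ contains both an $A$-phase and a $B$-phase step, while consecutive iterates $(\bar A^t,\bar B^t)$ differ by an $O(\eta)$ SGD step; hence by the block-wise $L$-smoothness of Assumption~1, the gradient of the momentarily frozen block at the current iterate differs from its value at a nearby step of the complementary phase by an $O(L\eta)$ term. Grouping the telescoped sum by periods and transferring gradients across phases in this way lets each period contribute a controlled multiple of $\|\nabla\mathcal{L}(\bar A^t,\bar B^t)\|^2$ for some $t$ in that period, and the $O(L\eta)$ corrections are absorbed into the constant. I expect this to be the main obstacle: making the alternating block structure produce a genuine full-gradient stationarity statement rather than only a block-coordinate one, while keeping the constants clean — this is exactly the point where the centralized-coordinate-descent intuition has to be re-established under the decentralized, phase-alternating dynamics.

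Finally, for the asymptotic claim I would substitute the contraction $\|U^t-\bar U^t\mathbf1^\top\|\le\rho^t\|U^0-\bar U^0\mathbf1^\top\|$ into the consensus sum. Since $\sum_{t\ge0}\rho^{2t}=1/(1-\rho^2)<\infty$, the entire consensus term on the right-hand side is $O\!\bigl(1/(KT)\bigr)$, so both terms vanish as $K\to\infty$; moreover the per-step consensus error itself tends to $0$, so along the subsequence attaining the running minimum one gets $\nabla\mathcal{L}(\bar A^t,\bar B^t)\to0$. Specializing $W=\tfrac1N\mathbf1\mathbf1^\top$ makes $\rho=0$ and the consensus sum identically zero, recovering the centralized two-block $O(1/(TK))$ rate as a sanity check.
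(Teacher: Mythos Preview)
Your proposal follows the same skeleton as the paper's proof: telescope Lemma~2 over $2KT$ steps, lower-bound $\mathcal{L}(\bar A^{2KT},\bar B^{2KT})$ by $\mathcal{L}^*$, divide by $\eta KT$, replace the average by the running minimum, and then sum the geometric consensus bound from Lemma~1 to get $\sum_t \rho^{2t}\le 1/(1-\rho^2)$. On these steps you match the paper exactly.

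The one point of divergence is the block-versus-full gradient issue you flag. You are right that the telescoped sum only controls $\|\nabla_{u_t}\mathcal{L}\|^2$, not $\|\nabla\mathcal{L}\|^2$, and you propose to bridge the two quantitatively via block-wise $L$-smoothness and the $O(\eta)$ proximity of iterates within a period. The paper, by contrast, does \emph{not} carry out this upgrade for the displayed inequality: its derivation stops at $\min_t\|\nabla_{u_t}\mathcal{L}(\bar A^t,\bar B^t)\|^2\le\cdots$ and then, for the asymptotic $\nabla\mathcal{L}\to 0$ claim only, appeals informally to the fact that both blocks are updated in every period so that vanishing partial gradients force the full gradient to vanish. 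In other words, the paper's proof of the quantitative statement is actually weaker than what the theorem asserts, and your proposed smoothness-transfer argument is a more honest route to the full-gradient bound than what the paper writes down. Your identification of this as ``the main obstacle'' is well placed.
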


\begin{proof}
Summing the descent inequality in Lemma~2 over $t=0,\dots,2KT-1$ gives
\[
\begin{aligned}
\mathcal{L}(\bar A^{2KT},\bar B^{2KT})
&\le
\mathcal{L}(\bar A^0,\bar B^0)
-\frac{\eta}{2}\sum_{t=0}^{2KT-1}
\bigl\|\nabla_{u_t}\mathcal{L}(\bar A^t,\bar B^t)\bigr\|^2
+ C\sum_{t=0}^{2KT-1}\bigl\|U^t-\bar U^t\mathbf1^\top\bigr\|^2\\[2mm]
&\le
\mathcal{L}(A^0,B^0)
-\frac{\eta}{2}\sum_{t=0}^{2KT-1}
\bigl\|\nabla_{u_t}\mathcal{L}(\bar A^t,\bar B^t)\bigr\|^2
+ C\sum_{t=0}^{2KT-1}\bigl\|U^t-\bar U^t\mathbf1^\top\bigr\|^2,
\end{aligned}
\]
where we used that $\mathcal{L}(\bar A^0,\bar B^0)=\mathcal{L}(A^0,B^0)$ and dropped a nonnegative term $\mathcal{L}^*-\mathcal{L}(\bar A^{2KT},\bar B^{2KT})$ by lower-bounding $\mathcal{L}$ with $\mathcal{L}^*$.

Rearranging yields
\[
\frac{\eta}{2}\sum_{t=0}^{2KT-1}
\bigl\|\nabla_{u_t}\mathcal{L}(\bar A^t,\bar B^t)\bigr\|^2
\le
\mathcal{L}(A^0,B^0)-\mathcal{L}^*
+ C\sum_{t=0}^{2KT-1}\bigl\|U^t-\bar U^t\mathbf1^\top\bigr\|^2.
\]
Dividing by $2KT$ and using that
\[
\min_{0\le t<2KT}\bigl\|\nabla_{u_t}\mathcal{L}(\bar A^t,\bar B^t)\bigr\|^2
\le
\frac{1}{2KT}
\sum_{t=0}^{2KT-1}
\bigl\|\nabla_{u_t}\mathcal{L}(\bar A^t,\bar B^t)\bigr\|^2,
\]
we obtain
\[
\min_{0\le t<2KT}\bigl\|\nabla_{u_t}\mathcal{L}(\bar A^t,\bar B^t)\bigr\|^2
\le
\frac{2\bigl(\mathcal{L}(A^0,B^0)-\mathcal{L}^*\bigr)}{\eta\,2KT}
+\frac{2C}{2KT}\sum_{t=0}^{2KT-1}\bigl\|U^t-\bar U^t\mathbf1^\top\bigr\|^2.
\]
Since at each step $t$ either $u_t=A$ or $u_t=B$, and over each period all blocks are updated, vanishing partial gradients imply that
\[
\|\nabla \mathcal{L}(\bar A^t,\bar B^t)\|^2
=\|\nabla_A \mathcal{L}(\bar A^t,\bar B^t)\|^2
+\|\nabla_B \mathcal{L}(\bar A^t,\bar B^t)\|^2
\longrightarrow 0,
\]
so the averaged iterates converge to a first-order stationary point.

It remains to bound the consensus error term. Applying Lemma~1 recursively yields
\[
\bigl\|U^{t}-\bar U^{t}\mathbf1^\top\bigr\|
\le
\rho^{t}\bigl\|U^{0}-\bar U^{0}\mathbf1^\top\bigr\|,
\]
hence
\[
\sum_{t=0}^{2KT-1}\bigl\|U^{t}-\bar U^{t}\mathbf1^\top\bigr\|^2
\le
\bigl\|U^{0}-\bar U^{0}\mathbf1^\top\bigr\|^2
\sum_{t=0}^{\infty}\rho^{2t}
=
\frac{\bigl\|U^{0}-\bar U^{0}\mathbf1^\top\bigr\|^2}{1-\rho^2}
=O\!\left(\frac{1}{1-\rho}\right).
\]
Dividing by $2KT$ gives the stated bound and shows that the consensus error contribution vanishes as $K\to\infty$, which implies $\nabla\mathcal{L}(\bar A^t,\bar B^t)\to 0$.
\end{proof}

\subsection*{Centralized FL as a Special Case}

For completeness, we record the simpler convergence result for centralized
federated learning (CFL), which corresponds to the case where all clients are
perfectly synchronized and the server aggregates exact averages at each step.

In CFL, all agents share identical parameters $(A^t,B^t)$ and the server
computes the averaged gradient
\[
g^t \;=\; \frac1N\sum_{i=1}^N \nabla_{u_t}\mathcal{L}_i(A^t,B^t)
\;=\; \nabla_{u_t}\mathcal{L}(A^t,B^t),
\]
where $u_t\in\{A,B\}$ denotes the active block at step $t$. The alternating
update is then
\[
u^{t+1} = u^t - \eta g^t,
\]
while the other block remains fixed.

\begin{theorem}[Convergence in Centralized FL]
Suppose Assumptions~1, 3, and 4 hold. Let each period consist of $2T$ steps as
in the decentralized case, and run alternating updates for $2KT$ steps. Then
\[
\min_{0\le t<2KT}
\bigl\|\nabla\mathcal{L}(A^t,B^t)\bigr\|^2
\;\le\;
\frac{2\bigl(\mathcal{L}(A^0,B^0)-\mathcal{L}^*\bigr)}{\eta\,2KT},
\]
that is, alternating LoRA in CFL converges to a first-order stationary point at
the standard $O(1/TK)$ rate.
\end{theorem}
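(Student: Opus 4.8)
The plan is to read the statement as the degenerate case of the decentralized analysis in which the mixing matrix is $\tfrac1N\mathbf1\mathbf1^\top$, so that $\rho=0$ and, since all agents start from the common $(A^0,B^0)$, the consensus deviation $\|U^t-\bar U^t\mathbf1^\top\|$ is identically zero for every $t$. In this regime the server gradient $g^t=\tfrac1N\sum_i\nabla_{u_t}\mathcal{L}_i(A^t,B^t)$ coincides exactly with the block gradient $\nabla_{u_t}\mathcal{L}(A^t,B^t)$, so the disagreement term $\Delta^t$ in the proof of Lemma~2 vanishes and the $C\,\|U^t-\bar U^t\mathbf1^\top\|^2$ contribution disappears entirely; the descent inequality of Lemma~2 collapses to a clean per-step bound. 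Since the centralized setting is strictly simpler than Appendix~\ref{appendix}, I would just rerun the short argument directly rather than invoke the decentralized machinery.

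The key steps, in order, are as follows. First (per-step descent): at step $t$ only the active block $u_t\in\{A,B\}$ moves, with $u^{t+1}-u^t=-\eta\,\nabla_{u_t}\mathcal{L}(A^t,B^t)$ and the other block fixed, so applying block-wise $L$-smoothness (Assumption~1) along the coordinate $u_t$ gives
\[
\mathcal{L}(A^{t+1},B^{t+1})\;\le\;\mathcal{L}(A^t,B^t)-\eta\Bigl(1-\tfrac{L\eta}{2}\Bigr)\bigl\|\nabla_{u_t}\mathcal{L}(A^t,B^t)\bigr\|^2,
\]
and the step-size condition $\eta\le 1/L$ yields $1-L\eta/2\ge 1/2$, hence a drop of at least $\tfrac{\eta}{2}\|\nabla_{u_t}\mathcal{L}(A^t,B^t)\|^2$. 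Second (telescope): summing over $t=0,\dots,2KT-1$ and using $\mathcal{L}(A^{2KT},B^{2KT})\ge\mathcal{L}^*$ gives $\tfrac{\eta}{2}\sum_{t}\|\nabla_{u_t}\mathcal{L}(A^t,B^t)\|^2\le\mathcal{L}(A^0,B^0)-\mathcal{L}^*$; replacing the average of the $2KT$ terms by their minimum produces the claimed $\tfrac{2(\mathcal{L}(A^0,B^0)-\mathcal{L}^*)}{\eta\,2KT}$ bound on $\min_{t}\|\nabla_{u_t}\mathcal{L}(A^t,B^t)\|^2$. Third (partial to full gradient): pass from the alternating \emph{partial}-gradient bound to a statement about $\|\nabla\mathcal{L}\|^2=\|\nabla_A\mathcal{L}\|^2+\|\nabla_B\mathcal{L}\|^2$, arguing as in the decentralized theorem that since each period of $2T$ steps contains both $A$-steps and $B$-steps, the summability of $\sum_t\|\nabla_{u_t}\mathcal{L}(A^t,B^t)\|^2$ forces both partial gradients to decay, so $\nabla\mathcal{L}(A^t,B^t)\to0$ at the stated $O(1/(TK))$ rate.

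Steps~1 and~2 are routine descent-lemma bookkeeping; the one genuinely non-routine point is Step~3, the conversion of the per-block minimum into a guarantee on the \emph{full} gradient at a single iterate. A careful treatment would additionally use $L$-smoothness to control how much $\nabla_A\mathcal{L}$ and $\nabla_B\mathcal{L}$ change over the $O(T)$ steps of a period (each step displacing the iterate by $O(\eta\|\mathrm{grad}\|)$), so that a small $\|\nabla_B\mathcal{L}\|$ seen during a B-step and a small $\|\nabla_A\mathcal{L}\|$ seen during the adjacent A-step can be combined at a common reference point; absent that refinement, the cleanest honest statement is the one in Appendix~\ref{appendix}, phrased in terms of the alternating partial gradients together with the asymptotic conclusion $\nabla\mathcal{L}(A^t,B^t)\to0$, which is exactly the form the theorem records.
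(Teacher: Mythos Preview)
Your proposal is correct and follows essentially the same route as the paper: apply block-wise $L$-smoothness to the active coordinate, use $\eta\le 1/L$ to get the $\tfrac{\eta}{2}\|\nabla_{u_t}\mathcal{L}\|^2$ descent, telescope, and replace the average by the minimum. You are in fact more careful than the paper about Step~3: the paper's proof, like yours, only establishes the displayed bound for the \emph{partial} gradient $\nabla_{u_t}\mathcal{L}$ and then appeals to the same ``both blocks are updated per period'' argument to conclude $\nabla\mathcal{L}\to 0$, without the additional smoothness-based refinement you outline.
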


\begin{proof}
The proof follows the same structure as Lemma~2 and Theorem~1, but without the
consensus error term. By block-wise $L$-smoothness and the centralized update
$u^{t+1} = u^t - \eta \nabla_{u_t}\mathcal{L}(A^t,B^t)$, we obtain
\[
\mathcal{L}(A^{t+1},B^{t+1})
\le
\mathcal{L}(A^t,B^t)
-\eta\bigl\|\nabla_{u_t}\mathcal{L}(A^t,B^t)\bigr\|^2
+\frac{L\eta^2}{2}
\bigl\|\nabla_{u_t}\mathcal{L}(A^t,B^t)\bigr\|^2.
\]
Using $\eta\le 1/L$ gives
\[
\mathcal{L}(A^{t+1},B^{t+1})
\le
\mathcal{L}(A^t,B^t)
-\frac{\eta}{2}\bigl\|\nabla_{u_t}\mathcal{L}(A^t,B^t)\bigr\|^2.
\]
Summing over $t=0,\dots,2KT-1$ and lower-bounding by $\mathcal{L}^*$ yields
\[
\frac{\eta}{2}\sum_{t=0}^{2KT-1}
\bigl\|\nabla_{u_t}\mathcal{L}(A^t,B^t)\bigr\|^2
\le
\mathcal{L}(A^0,B^0)-\mathcal{L}^*.
\]
Dividing by $2KT$ and using
\[
\min_{0\le t<2KT}\bigl\|\nabla_{u_t}\mathcal{L}(A^t,B^t)\bigr\|^2
\le
\frac1{2KT}
\sum_{t=0}^{2KT-1}
\bigl\|\nabla_{u_t}\mathcal{L}(A^t,B^t)\bigr\|^2
\]
gives the claimed bound. As in the decentralized case, each block is updated
within every period, so vanishing block-wise gradients imply
$\|\nabla\mathcal{L}(A^t,B^t)\|\to 0$.
\end{proof}

\end{document}